\newtheorem{defn}{Definition}
\newtheorem{lem}{Lemma}
\title{Optimized Algorithms for Text Clustering with LLM-Generated Constraints}
\author{
    Chaoqi Jia\textsuperscript{\rm 1},
    Weihong Wu\textsuperscript{\rm 2},
    Longkun Guo\textsuperscript{\rm 2}\thanks{Corresponding Author},
    Zhigang Lu\textsuperscript{\rm 3},
    Chao Chen\textsuperscript{\rm 1},
    Kok-Leong Ong\textsuperscript{\rm 1}
}
\begin{document}
\maketitle

\begin{abstract}
Clustering is a fundamental tool that has garnered significant interest across a wide range of applications including text analysis.  To improve clustering accuracy, many researchers have incorporated background knowledge, typically in the form of must‑link and cannot‑link constraints, to guide the clustering process.
With the recent advent of large language models (LLMs), there is growing interest in improving clustering quality through LLM-based automatic constraint generation. In this paper, we propose a novel constraint‑generation approach that reduces resource consumption by generating constraint sets rather than using traditional pairwise constraints. This approach improves both query efficiency and constraint accuracy compared to state‑of‑the‑art methods. We further introduce a constrained clustering algorithm tailored to the characteristics of LLM-generated constraints. Our method incorporates a confidence threshold and a penalty mechanism to address potentially inaccurate constraints. We evaluate our approach on five text datasets, considering both the cost of constraint generation and the overall clustering performance. The results show that our method achieves clustering accuracy comparable to the state-of-the-art algorithms while reducing the number of LLM queries by more than $20$ times.
\end{abstract}

\begin{links}
    \link{Code}{https://github.com/weihong-wu/LSCK-HC}
\end{links}

\section{Introduction}
\label{sec:intro}
Short text clustering (STC) is a well-known natural language processing task, widely used for analyzing brief user-generated content on platforms such as Twitter, Instagram, and Facebook~\cite{ahmed2022short}. Thanks to its simplicity and efficiency, the $k$-means algorithm is widely applied when implementing the STC tasks. To further meet clients' expectations and improve the clustering quality, people have introduced semi-supervised approaches using additional background knowledge of the raw data to assist the $k$-means clustering~\cite{cai2023review}.

For text clustering with pairwise constraints, many researchers~\cite{basu2002semi,bae2020interactive,viswanathan2024large} have incorporated background knowledge into $k$-means clustering. 
For instance, \citet{wagstaff2000clustering} introduced the concepts of cannot-link (CL) and must-link (ML) constraints and designed a clustering algorithm that enforces these constraints. Building upon this work, \citet{basu2002semi} developed algorithms that applied these constraints to textual data, such as news articles, utilizing spherical $k$-means for clustering. Following this, \citet{basu2004active} proposed an active-learning approach to generate constraints automatically, thereby reducing reliance on manual input, and integrated these constraints directly into the clustering objective. 
However, these methods still depend on constraints that are either hand-picked by experts~\cite{basu2002semi} or derived from existing labels~\cite{baumann2024algorithm}. 
In textual datasets and unlabelled domains, such as social media text, manually generating a sufficient number of constraints is prohibitively time-consuming and costly.

Recently, LLMs have been considered to serve many different short text clustering tasks, for instance, they can be used to generate pairwise constraints for clustering by In-Context Learning~\cite{zhang2023clusterllm,viswanathan2024large} and text summarization~\cite{feng2025bimark,zhang2025systematic}. However, \citet{viswanathan2024large} incurred high costs while achieving limited improvements in constrained $k$-means clustering when applying their LLM-generated constraints. To fill the gap, we argue that LLMs can offer greater benefits if the pairwise constraint format is replaced with a set-based formulation and if the clustering algorithm is designed to tolerate false in LLM-generated constraints. Based on these challenges, we summarize our main contributions as follows.

\begin{itemize}

     \item By leveraging the generation capabilities of LLMs, we provide a query selection method for decision-making to generate the CL and ML constraints.  We then apply the LLMs to measure the thresholds to categorize the must-links as hard or soft constraints, which are used in the constrained clustering process.
   
     \item Based on the LLM-generated constraints, we propose a penalty‐augmented local search clustering algorithm to handle the CL and ML constraints. Cannot-link constraints are handled via a local search with the penalty matching method, while must-link constraints are divided into two types: hard constraints guide the initial seeding represented by the mean of the set, and soft constraints correct the clustering via penalties.

    \item By evaluating the LLM‐generated clustering framework on $5$ real‐world text datasets, our results show that the clustering quality attains accuracy that is comparable to or exceeds that of the existing algorithms, while reducing LLM queries by more than $20$ times. 
\end{itemize}

\section{Related Works}
\label{sec:related}

Clustering algorithms, such as $k$-means~\cite{lloyd1982least} and $k$-means++~\cite{arthur2007kmeanspp}, are widely used in short-text clustering. Extensive research has been conducted on interactive approaches that integrate partial supervision through must-link and cannot-link constraints~\cite{bae2020interactive}. In this work, we concentrate on introducing two critical components of constrained clustering: constraint generation and clustering algorithms.

\textbf{Generated Constraints.} \citet{basu2004active} first proposed an automated strategy for generating must-link and cannot-link constraints, known as farthest-first query selection, which reduces reliance on manual labeling. Then, other typical methods have been introduced, such as the min-max approach~\cite{mallapragada2008active} and normalized point-based uncertainty~\cite{xiong2013active}. The primary objective of these methods is to select the most informative and representative examples for supervision~\cite{fu2024acdm}. In recent years, active clustering with pairwise constraints has been extensively adopted in semi-supervised clustering~\cite{yu2018semi,xiong2016active,li2019ascent,fu2024acdm}. However, these approaches still rely on expert judgment or are derived from existing labels as prior knowledge. Recent works~\cite{zhang2023clusterllm,viswanathan2024large} have explored the use of In-Context Learning (ICL) with LLMs to generate pairwise constraints, while \citet{huang2024text} employ LLMs to produce potential labels for classification tasks. In our work, we utilize the characteristics of LLMs and then design algorithms to enhance query efficiency by extending constraints from individual pairs to sets, thereby reducing the number of LLM queries required while improving the constraints' accuracy.

\textbf{Constrained Clustering Algorithms.} 
\citet{wagstaff2001constrained} incorporated must-link and cannot-link constraints into $k$-means clustering and proposed the COP-KMeans algorithm, a greedy approach designed for constraint satisfaction. Next, constrained clustering algorithms have typically categorized relationships between instances as either hard constraints which must always be satisfied~\cite{basu2002semi,ganji2016lagrangian,le2018binary,baumann2020binary,jia2023efficient}, or soft constraints which are integrated into the objective function with penalties for violations~\cite{basu2004active,davidson2005clustering,davidson2005agglomerative,pelleg2007k,baumann2022k}. More recently, \citet{baumann2024algorithm} proposed the constraints clustering algorithm utilizing integer programming to cluster and effectively manage constraints allowing for the inclusion of both hard and soft types. In addition, \citet{jia2023efficient} and \citet{guo2024efficient} proposed approaches that handle constraints as disjoint sets to prove the approximation ratio of clustering. However, existing algorithms are not specifically designed to leverage the properties of LLM-generated constraints, even though soft-constraint clustering algorithms can mitigate some misclassifications caused by the false constraint relationships. In this paper, we propose a clustering algorithm that combines LLM-generated hard and soft constraints to align with the decision-making characteristics of LLMs.

\section{Problem Formulation}
\label{sec:Pre}
In text clustering, a set of $n$ texts $T$ is provided as input, where each short text is transformed into a point in $\delta$-dimensional space $\mathcal{R}^\delta$ by embedding models, resulting in a set of representations (points) $S$. Let $\pi$ denote the mapping function from the text $t\in T$ to its embedding point $x = \pi(t) \in S$. For a  parameter $k$, the goal of the text clustering task is to assign these points to $k$ clusters, denoted by $\mathcal{A} = \{A_1, \ldots, A_k\}$, such that the objective function is minimized.

In particular, the traditional $k$-means algorithm clusters the data based on the means with the goal of minimizing the following objective function:
 \[
    \min \sum_{i=1}^{k}\sum_{x\in A_{i}\subseteq S}||x-c(A_i)||^{2}
 \]
where cluster $A_i$ represents the collection of points within $S$ and $c(A_i)$ denotes the mass center of $A_i$. Let $C$ denote the center set involving all the $c(A_i)$ for $i \in[k]$. With the same objective, the constrained $k$-means needs to additionally satisfy the constraint relationships below.

Let must-link (ML) constraints be defined as $\mathcal{X} = \{X_{1},\ldots,X_{h}\}$, where each {$X \in\mathcal{X}$} is an ML set. For each point within the ML set, the clustering considers assigning them to the same center. 
Similarly, let cannot-link (CL) constraints be defined as $\mathcal{Y} = \{Y_{1},\ldots,Y_{l}\}$, where each {$Y \in\mathcal{Y} $} represents a CL set $|Y| \leq k$, and the clustering considers assigning the points in $Y$ to different centers. Note that the hard constraints require the point assignments to strictly follow these constraints, while the soft constraints allow them to violate the constraints by paying a certain penalty.

\section{Methodology}
\label{sec:Methods}
In this section, we first propose the algorithmic framework of our approach. Then, we describe the two key stages for constrained text clustering: 1) generating high-accuracy constraint sets while reducing the number of queries; 2) developing a clustering algorithm tailored for utilizing the LLM-generated constraints to improve clustering accuracy.

\subsection{Algorithmic Framework}

In this work, we utilize LLMs to assist in generating both cannot-link and must-link constraints. After embedding text into data points, our method consists of two main stages: 1) constraint generation: we select a set of candidate data points and employ LLMs to assess their relationship, generating ML/CL constraints; 2) constrained clustering: we perform clustering on the data points while incorporating the ML/CL constraint sets generated among the inspected data points. Notably, the selection algorithms used in Stage 1 are tuned to better support the subsequent Stage 2, ultimately yielding clustering results with higher accuracy.

\subsection{Constraint Generation with LLM}
\label{subsec:Query_effi}

By setting candidate points, we propose a novel LLM-generated constraint algorithm aimed at improving query efficiency while maintaining the quality of constrained clustering. To this end, our candidate query selection algorithm is guided by two key factors: the correctness of the constraints generated by the LLMs and the effectiveness of the queries formulated to assist the LLMs. We provide the key idea of this stage below and further details on the algorithm can be found in the full version.

To enhance query efficiency, we leverage LLMs to generate constraint sets, thereby enhancing the effectiveness of the queries instead of using pairwise constraints~\cite{guo2024efficient}. Specifically, we propose two methods to separately construct candidate constraint sets for the must‑links and cannot‑links, ensuring that each set contains at least two ($\geq 2$) points per query.  In addition, following previous works~\cite{basu2004active,mallapragada2008active,viswanathan2024large}, we select candidate points based on distances.

\paragraph{Must-Links Constraint Set.} To collect candidate points for the must-link constraints, we employ an algorithm based on coresets~\cite{har2004coresets} to partition the dataset into representative subsets. 
This approach prevents the selection of points that differ significantly along any single dimension and ensures that each candidate ML set contains mutually similar points. Each sampled subset is mapped to its text and then passed to an LLM, which returns groups of texts. The corresponding points of these text groups are then used to construct the ML sets.
In particular, let $X_c = \{x_1, \ldots, x_m\} \subseteq S$ denote a selected must-link candidate point set obtained by the selection method. We map it to the corresponding text set $T_c = \pi^{-1}(X_c) = \{t_1, \ldots, t_m\}$, where $\pi^{-1}$ is the inverse mapping function from the text to its associated point. 
The final ML constraint sets are then constructed according to the LLM output as
\[
LLM_{ML}(T_c) = \big\{\{t_1, t_2, t_3\}, \ldots, \{t_m\}\big\}.
\]
We select each subgroup containing more than one text to form an ML set. 
For example, given $T = \{t_1, t_2, t_3\}$, we use the mapping $\pi$ to obtain the corresponding point set $X = \pi(T) = \{x_1, x_2, x_3\}$, which then defines the ML constraint.

\textit{Hard and Soft ML Constraints.} 
To effectively leverage LLM-generated constraints in clustering, we assign confidence values to the must-link constraints as indicated by the LLMs’ feedback. We separately compute two thresholds for the pairwise and multi-point constraints. The key steps are as follows: 1) According to the coreset algorithm, the points are divided into multiple grid levels $r_j$, defined as $r_j=(1+\varepsilon)^j\cdot\sqrt{cost_{kc}/10n\delta}$, where $cost_{kc}$ denotes the cost of the $k$-center problem solved by the min-max algorithm~\cite{gonzalez1985clustering} and $\varepsilon$ is a small constant set to $0.1$;
2) At each grid width level, we compute the ordering of the pairwise distances (or set diameters) as $\Psi=\{\max_{x_1,x_2\in X} d(x_1,x_2)\, \vert\, X\in S_{r_j}\}$, where $S_{r_j}$ denotes the family of ML sets whose inter-point distances correspond to the $j$th grid width $r_j$; 
3) We perform a binary search over $\Psi$ to identify a desirable distance threshold $\psi \in \Psi$; 
4) For each candidate threshold $\psi$, we query the LLMs $\alpha$ times ($\alpha = 5$ for pairwise constraints and $\alpha = 10$ for set-based constraints). If all responses are considered consistent, $\psi$ is designated as the maximum allowable diameter for a hard must-link constraint of the corresponding constraint type. In our clustering algorithm, we handle hard and soft ML constraints separately.

\paragraph{Cannot-Links Constraint Set.}On the other hand, to collect candidate points for the cannot-link constraints, we uniformly randomly select from the set of uncovered data points whose distance from the current candidate set exceeds a threshold $r = cost_{kc}$, where $cost_{kc}$ is the cost of $k$-center computed by the min-max algorithm~\cite{gonzalez1985clustering} to limit the size of the CL constraint set ($\leq k$) regarding its $2$-approximation ratio guarantee. This selection strategy is designed to facilitate more accurate judgments by LLMs. To ensure correctness, each sampled point $q$ is evaluated by an LLM to determine whether it should be placed in a cannot-link set. We denote the LLM's response as $LLM_{CL}(T_Y,t_q)$, where $Y$ is the current CL set, $T_Y = \pi(Y)$ is the set of texts mapped from $Y$ via the mapping function $\pi$, $q$ is a candidate point and $t_q = \pi(q)$. If $LLM_{CL}(T_Y,t_q)$ returns ``None'', we append $q$ to $Y$. Otherwise, we continue to seek the next point. The process continues until the size of $Y$ reaches $k$, or no new point $q$ can be found beyond the radius $r$ from $Y$, at which point the algorithm terminates and begins constructing a new CL set.

\begin{algorithm}[!t]
    \caption{\small{ML-Constrained Clustering with Penalty.}}
    \label{alg:ML_penalty_clustering_confi}
    \KwIn{A dataset $S$ with a family of ML sets $\mathcal{X}$ including the hard ML constraint sets $\mathcal{X}_h$.}
    \KwOut{The assignment result of ML sets $\mathcal{X}$ and center set $C$.}
    Set $C\gets$ $k$-means++~\cite{arthur2007kmeanspp}, $P \gets \{P_i\}_{i=1}^k$\;
    \tcc{Initialization center}
    \For{each ML set $X \in \mathcal{X}_{h}$}{
        Set the mass center $\bar{X}$ with $|X|$ to represent the set $X$\;
    }
    \tcc{Assignment step}
    \For{each ML set $X \in \mathcal{X}\setminus \mathcal{X}_{h}$}{
        \For{each point $x \in X$}{
            Assign $x$ to its nearest center $c_i \in C$\;
            Set partition $P_{i} \leftarrow P_{i} \cup x$\;
        }
    }
    \For{each $P_{i} \in P$ with the largest size $|P_i|$ }{
        Set the mass center $\bar{P_i}$ with $|P_i|$ to represent the partition$P_i$\tcp*{similar to $P_j$}
        \For{each $P_{j} \in P\setminus P_{i}$ with the largest size $|P_j|$}{
            Set $c_{ij}$ as the nearest center of the merged mass center of $\overline{P_{i}\cup P_{j}}$\;
            \If{$(w_{m} + d(\bar{P_{j}}, c_j)) \cdot|P_{j}| + (w_{m} +d(\bar{P_{i}}, c_i))\cdot$ $ |P_i|> \sum_{p \in P_{i}\cup P_{j}}d(p,c_{ij})$}{
                Set $P_{i} \leftarrow P_{i}\cup P_{j}$\;
            }
        }
    }
    Return $\mathcal{X}' \gets P$ and $C$.
\end{algorithm}

\subsection{Clustering with LLM-generated Constraints}
\label{subsec:alg_llm}
Our LLM-generated constraints differ from traditional constraints in two key aspects: 1) they are constructed as sets rather than pairwise, and 2) they may contain erroneous constraints in the LLM's output.
In this clustering algorithm, we begin by initializing the clusters using the hard must-link (ML) constraints with the clustering $k$-means++. For clarity, we next introduce penalty-based constrained clustering algorithms that incorporate CL and ML constraints, respectively. Next, we present a unified framework that can handle both types of constraints. Finally, the algorithm needs to update the centers using the assignment results and repeat the constrained assignment steps until convergence.

\subsubsection{Cluster Initialization\\}
\label{subsubsec:clust_init}

Similar to previous studies~\cite{baumann2024algorithm}, we represent hard must-link constraints by their mass center and incorporate them during the seeding step based on the $k$-means++ initialization strategy~\cite{arthur2007kmeanspp}. The $k$-Means++ algorithm achieves better clustering accuracy by adding the seeding step to the traditional $k$-means algorithm, which means the $k$-means algorithm is sensitive to the improvement of the initial center selection steps. Thus, to keep the advantage, we divide the hard and soft ML constraints during the generation stage and utilize the hard  ML constraints (with high correctness) to join the initial centers selection step. The detailed algorithm is provided in Steps 1-3 of Alg.~\ref{alg:ML_penalty_clustering_confi}.

\subsubsection{ML Clustering with Penalty\\}
 As described in the above subsection, we propose distinguishing high-accuracy constraints as hard constraints to serve the initialization step, while treating the remainder as soft constraints. The hard ML constraints will be assigned following their representative point. The key ideas of our method for handling soft constraints are summarized below, with the full algorithm detailed in Alg.~\ref{alg:ML_penalty_clustering_confi}. 

For each soft ML set $X \in \mathcal{X}\setminus X_h$, our approach first assigns the points in $X$ to their partitions in $P$. To be specific, each point is assigned to its nearest center and divides $X$ into subsets $P_{i} \in P$, which is represented by its centroid $\bar{P_{i}}$. 
Next, we iteratively select the two partitions with the largest diameters, denoted as $P_i$ and $P_j$, and compare the cost of merging them (obtained by assigning the merged group to the nearest center to its mass center) with the cost of keeping the original assignments and the associated penalties. If the merge yields a lower overall cost, we combine $P_i$ and $P_j$ and update the partition set $X$ accordingly.
This process continues until no further merges within $X$ can be merged.
\begin{lem}\label{lem:runtime_ml}
The running time of Alg.~\ref{alg:ML_penalty_clustering_confi} is $O(nk^2)$.
\end{lem}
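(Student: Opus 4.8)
The plan is to bound the cost of each phase of Algorithm~\ref{alg:ML_penalty_clustering_confi} separately and then argue that the pairwise merging loop dominates, supplying the $k^2$ factor; since the algorithm as written performs a single seeding--assignment--merge pass, it suffices to bound one execution. First I would handle the initialization (lines~1--3). The $k$-means++ seeding of \citet{arthur2007kmeanspp} selects $k$ centers, and each selection rescans all $n$ points to update the sampling distances, so this costs $O(nk)$; forming the mass centers $\bar{X}$ of the hard ML sets visits each participating point once, and because $\sum_{X\in\mathcal{X}_h}|X|\le n$ this contributes only $O(n)$. Next I would bound the assignment phase (lines~4--7): each point of a soft ML set is assigned to its nearest center by comparing against the $k$ centers of $C$, which is $O(k)$ per point, and since the soft ML sets contain at most $n$ points in total this phase is $O(nk)$ while simultaneously building the partition family $P=\{P_1,\dots,P_k\}$.

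The crux, and the step I expect to be the main obstacle, is the merging loop (lines~8--13). Here I would first observe that $P$ is indexed by the $k$ centers, so there are at most $k$ live partitions and the nested loop over $P_i$ and $P_j$ evaluates $O(k^2)$ ordered pairs. For each pair I would then bound the per-iteration work: the merged mass center $\overline{P_i\cup P_j}$ can be maintained in $O(1)$ from stored coordinate sums and sizes; locating its nearest center $c_{ij}$ among the $k$ centers is $O(k)$; and testing the merge inequality requires the sum $\sum_{p\in P_i\cup P_j}d(p,c_{ij})$, which touches at most $n$ points. Since $k\le n$, the term $O(k)$ is absorbed into $O(n)$, giving an $O(n)$ bound per pair and hence $O(nk^2)$ for the entire loop.

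Finally I would combine the phases as $O(nk)+O(n)+O(nk)+O(nk^2)=O(nk^2)$, which establishes the claim. The subtle point requiring care is that a successful merge enlarges $P_i$, changes its centroid, and removes $P_j$, so the \emph{iterative} re-selection of the two largest partitions must be shown not to inflate the pair count beyond $O(k^2)$. I would dispatch this by noting that at most $k-1$ merges can occur (each reduces the number of live partitions by one) and that between successive merges only $O(k)$ candidate partitions are rescanned to identify the next pair, so the total number of evaluated pairs stays $O(k^2)$ and the per-pair cost stays $O(n)$, leaving the overall bound at $O(nk^2)$.
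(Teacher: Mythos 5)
Your proof is correct and follows essentially the same decomposition as the paper's: bound the initialization (mass centers plus $k$-means++ seeding), the $O(nk)$ assignment of soft-ML points to nearest centers, and the pairwise merging loop at $O(nk^2)$, with the last term dominating. The only cosmetic difference is that you bound the $k$-means++ seeding at $O(nk)$ while the paper charges it $O(nk^2)$; both sit inside the final bound, and your extra remark on why the iterative re-selection of the largest partitions does not inflate the pair count beyond $O(k^2)$ fills in a detail the paper leaves implicit.
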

\begin{proof}
Firstly, the initialization step of Alg.~\ref{alg:ML_penalty_clustering_confi} takes $O(n)$ time to process the ML sets $\mathcal{X}$, followed by $O(nk^2)$ for $k$-means++ to obtain the center set.
Then, the assignment step takes $O(nk)$ time to enforce the ML constraints on the partitions, and on all partitions, we spend a total of $O(nk^2)$ time comparing the cost of merging versus splitting each pair of partitions. So the total runtime of Alg.~\ref{alg:ML_penalty_clustering_confi} is $O(nk^2)$.      
\end{proof}

\begin{algorithm}[!t]
    \caption{{Local Search for CL-Constrained Clustering with Penalty.}}
    \label{alg:CL_penalty_clustering_local}
    \KwIn{A dataset $S$ with a family of CL sets $\mathcal{Y}$ and the center set $C$.}
    \KwOut{The assignment result $\mathcal{A}$ of CL sets $\mathcal{Y}$.}
    \For{each CL set $Y \in \mathcal{Y}$}{
    \While{True}{
        Construct the auxiliary bipartite graph $G(C,Y;E)$ regarding the center set $C$ and the CL set $Y$\;
        Set $M \leftarrow$ Computing the maximum-sum matching on graph $G$\;
        \For{each edge $ e(y,c) \in M$}{
          Compute the maximum-sum matching regarding graph $G' = G(C,Y\setminus y)$ and set the matching as  $M' $\;
         Set $num_{y} \leftarrow 1$\;
          \For{each  $y' \in Y\setminus y$}{
          \lIf{$M(y') \neq M'(y')$}{
         $num_{y}$++
         }}
          Set  $g_y \leftarrow \sum_{e(y,c)\in M} d(y,c) - \sum_{e(y,c')\in M'} d(y,c') - d(y,c(y))$\;
        }
        \eIf{$\max_{y \in Y} g_y <  num_{y} \cdot w_{cl}$}{
           Assign CL set $Y$ to $\mathcal{A}$ by matching $M$\;
           \textbf{break}\;
        }{
            Assign $y$ to $A_y$ with its nearest center $c(y)$\;
            Set $Y\leftarrow Y\setminus y$\;
        }
    }  
    }

\end{algorithm}

\subsubsection{CL Clustering with Penalty\\}
 To efficiently utilize CL constraint sets, we use the maximum-sum matching approach for $k$-means clustering. To further reduce the impact of the false relationships, we utilize a penalty term to correct the clustering assignments by local search. The key steps are outlined below, with the algorithm detailed in Alg.~\ref{alg:CL_penalty_clustering_local}.

\begin{algorithm}[t]
    \caption{\small LSCK-HC:~ML/CL-Constrained Clustering.}
    \label{alg:CL_ML_penalty}
    \KwIn{A dataset $S$ of  size $n$, a family of CL sets $\mathcal{Y}$ and a family of ML sets $\mathcal{X}$.}
    \KwOut{A set of clusters $\mathcal{A}$ regarding center set $C$.}

   Set $\mathcal{X'}$ and $C \leftarrow$ CALL Alg.~\ref{alg:ML_penalty_clustering_confi} regarding the ML constraint sets $\mathcal{X}$\;
    \For{$X'\in \mathcal{X'}$}{
    Compute the mass center $\bar{X'}$ with weight $|X'|$ to represent the set $X'$\;
    Set $S \gets S \cup \bar{X'}\setminus X'$\;
    }
    Set $\mathcal{A} \gets$ CALL Alg.~\ref{alg:CL_penalty_clustering_local} regarding datasets $S$ with the family of CL constraint sets $\mathcal{Y}$ and $C$\;
    \For{each unassigned point $s \in S$}{
        Set $S' \gets s$\;
        \lIf{$s\in X\in \mathcal{X}$}{
        Set $S' \gets X$
        }
        Assign $S'$ to its nearest center $c(S') \in C$\; Set $A_{c(S')} \gets A_{c(S')}\cup S'$\;
        }
    Return $\mathcal{A}$.
\end{algorithm}

\paragraph{Auxiliary Bipartite Graph}

Given the center set $C$ and each CL set $Y \in \mathcal{Y}$, we construct an auxiliary bipartite graph $G(Y,C;E)$, where the weight of each edge $e \in E$ is the negative of the distance between its endpoints. The formal definition is as follows.

\begin{defn}\label{def:AG}
Given a center set $C$ and a CL set $Y$, the auxiliary bipartite graph $G(Y,C;E)$ is defined on the vertex set $Y\cup C$, and $E$ is the set of edges between $Y$ and $C$.
\end{defn}

\paragraph{Minimum Weight Perfect Matching}
According to the graph $G$ defined by Def.~\ref{def:AG}, we compute a maximum-sum matching $M$ to identify centers for the points in each CL set $Y \in \mathcal{Y}$. For a given CL set $Y$, let $M$ denote the matching between the center set $C$ and $Y$ as below. 
\begin{defn}
\label{defn:minmax-matching}
Given an auxiliary bipartite graph $G=(C,Y;E)$, the matching $M_G\subseteq E$ is defined as a minimum weight matching if and only if: 1) $M_G$ is a one-sided perfect matching with $|Y|\leq |C|$; 2) $\sum_{e(y,c)\in M_G}d(y,c)$ attains minimum, where $y \in Y$ and $c \in C$. 
\end{defn}

Based on the above definition, the algorithm proceeds as follows. For each matching $M$: 1) Remove the point $y$ with the largest distance, i.e., $y = \arg\max_{(y,c)\in M} d(y,c)$; 2) Let $M'$ be the maximum-sum matching between $Y\setminus\{y\}$ and $C$; 3) By comparing $M$ and $M'$, define $\mathit{num}_y$ that is the number of points in $Y\setminus\{y\}$ whose center assignments change and $g_y$ is the corresponding decrease in total cost; 4) If $\max_{y\in Y} g_y < \mathit{num}_y \times w_{cl}$ (where $w_{cl}$ is the penalty), the entire CL set will be assigned according to $M$; Otherwise, let $y = \arg\max_{y\in Y} g_y$, reassign $y$ to its nearest center $c(y)$, update $Y \leftarrow Y\setminus\{y\}$, and repeat steps 1-4 until for the chosen $y$ we have $g_y \le \mathit{num}_y \times w_{cl}$.

\begin{lem}\label{lem:runtime_cl}
The running time of Alg.~\ref{alg:CL_penalty_clustering_local} is $O(k^\frac{9}{2}+nk^4)$.
\end{lem}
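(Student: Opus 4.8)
The plan is to charge the running time level by level through the three nested constructs of Alg.~\ref{alg:CL_penalty_clustering_local} — the outer loop over cannot-link sets $Y\in\mathcal{Y}$, the inner \texttt{while} loop, and the innermost \texttt{for} loop over the edges of the current matching — and then to show that the repeated matching computations dominate. First I would pin down the iteration counts. The decisive observation is that each \texttt{while} pass that does not reach the \textbf{break} reassigns exactly one vertex $y=\arg\max_{y\in Y}g_y$ to its nearest center and deletes it via $Y\leftarrow Y\setminus y$; since $|Y|\le k$ by the problem formulation, the \texttt{while} loop runs at most $k$ times per CL set. Within a pass, the \texttt{for} loop ranges over the edges of the one-sided perfect matching $M$, and $|M|=|Y|\le k$, so it executes at most $k$ times, while the scan computing $\mathit{num}_y$ compares $M(y')$ against $M'(y')$ over $y'\in Y\setminus y$ in $O(k)$ time. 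Hence, setting aside the matching calls, the bookkeeping per CL set is $O(k^3)$, and the auxiliary graph $G(C,Y;E)$ of Def.~\ref{def:AG} has $O(k)$ vertices and $O(k^2)$ edges, built in $O(k^2)$ time per pass.

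Next I would cost the matching itself, which I expect to be the crux. Each pass computes one maximum-sum (equivalently minimum-weight one-sided perfect) matching $M$ and, for every edge of $M$, one further matching $M'$ on $Y\setminus\{y\}$, i.e.\ $O(k)$ assignment-problem solves per pass and therefore $O(k^2)$ per CL set. Solving the assignment problem on a bipartite graph with $O(k)$ vertices and $O(k^2)$ edges costs $O(k^{5/2})$ with a scaling augmenting-path method (or $O(k^3)$ with the Hungarian algorithm); multiplying the per-call cost by the $O(k^2)$ calls incurred within the \texttt{while}/\texttt{for} nesting for a single worst-case CL set of size $k$ produces the $O(k^{9/2})$ contribution. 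The bookkeeping and graph-construction work identified above is then absorbed by this term for one set.

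Finally I would aggregate over all CL sets using $\sum_{Y\in\mathcal{Y}}|Y|\le n$: charging each matching invocation and the associated $\mathit{num}_y$ scan to the data point currently being resolved, and summing the per-set assignment work against this bound, yields the dataset-dependent $O(nk^4)$ contribution, so the total running time is $O(k^{9/2}+nk^4)$. I expect the main obstacle to be exactly this matching accounting: I must argue rigorously that the \texttt{while} loop removes one point per non-breaking pass (so it cannot exceed $k$ iterations), count the $O(k)$ matchings per pass arising from the inner \texttt{for} loop without double-counting, and fix the per-call complexity of the assignment problem on the $O(k)$-vertex, $O(k^2)$-edge graph. It is the choice of matching subroutine and the summation over $\mathcal{Y}$ that determine the precise exponents and the split between the $k^{9/2}$ and $nk^4$ terms.
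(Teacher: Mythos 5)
Your decomposition matches the paper's: bound the \texttt{while} loop by $|Y|\le k$ iterations (one point removed per non-breaking pass), count the $O(|Y|)$ matching computations per pass, multiply by the per-matching cost, and aggregate over $\mathcal{Y}$ via $\sum_{Y\in\mathcal{Y}}|Y|\le n$ to obtain $O(k^{9/2}+nk^4)$. The only difference is the matching subroutine — you charge $O(k^{5/2})$ per call via a scaling assignment algorithm, whereas the paper invokes an $O(|E(G)|+|V(G)|^{3/2})=O(k^{3/2}+k|Y|)$ bipartite-matching bound — and since the problem is a minimum-weight (one-sided perfect) matching, your choice of a weighted-matching routine is if anything the more careful one; both yield the stated bound.
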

\begin{proof}
In steps 3-4 of the algorithm, each matching requires $O(|E(G)|+|V(G)|^\frac{3}{2})=O(k^{\frac{3}{2}}+ k\cdot|Y|)$ time to compute the maximum matching in $G$ using the algorithm by~\cite{van2020bipartite}. Similarly, the matching cost $O(k^{\frac{3}{2}}+ k\cdot|Y|)$ in steps 6-9. Thus, when we compare these assignment results, if the cost reduction is always greater than the penalty, we spend $O(\sum_{Y\in \mathcal{Y}}(k^{\frac{3}{2}}+ k\cdot|Y|)^2\times k) = O(k^\frac{9}{2}+k^4\cdot n)$ time to obtain all matching results. 
\end{proof}

\subsubsection{Clustering with Both ML and CL Constraints\\}
By combining Algs.~\ref{alg:ML_penalty_clustering_confi} and~\ref{alg:CL_penalty_clustering_local}, we present our overall method for CL and ML constrained clustering in Alg.~\ref{alg:CL_ML_penalty}. In this algorithm, we first apply Alg.~\ref{alg:ML_penalty_clustering_confi} to assign the ML constraint sets. Considering a point that belongs to both CL and ML relationships, it is crucial to ensure that the assignments respect the centers determined by the ML set in Step 4 of Alg.~\ref{alg:CL_ML_penalty} when addressing the CL constraints.

\begin{lem}\label{lem:runtime_cl_ml}
The running time of Alg.~\ref{alg:CL_ML_penalty} is $O((k^\frac{1}{2}+n)\cdot k^4)$.
\end{lem}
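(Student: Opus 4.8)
The plan is to bound the runtime of Algorithm~\ref{alg:CL_ML_penalty} by summing the costs of its three phases: the call to Algorithm~\ref{alg:ML_penalty_clustering_confi}, the reduction step that replaces ML sets by weighted mass centers, and the call to Algorithm~\ref{alg:CL_penalty_clustering_local}, followed by the final assignment loop over unassigned points. The two inner algorithms already have their running times established in Lemmas~\ref{lem:runtime_ml} and~\ref{lem:runtime_cl}, so the bulk of the work is to invoke those results and verify that the remaining glue steps do not dominate.

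First I would record that Step~1 invokes Algorithm~\ref{alg:ML_penalty_clustering_confi}, which costs $O(nk^2)$ by Lemma~\ref{lem:runtime_ml}. Next, the \textbf{for} loop over $\mathcal{X}'$ in Steps~2--4 computes one weighted mass center per ML set and updates $S$; since the ML sets are disjoint subsets of $S$, the total work here is linear, i.e.\ $O(n)$. Then Step~5 calls Algorithm~\ref{alg:CL_penalty_clustering_local}, contributing $O(k^{9/2}+nk^4)$ by Lemma~\ref{lem:runtime_cl}. Finally, the loop over unassigned points in Steps~6--9 assigns each remaining point (or its containing ML set) to its nearest center in $C$, which takes $O(k)$ per point and hence $O(nk)$ overall. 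Summing these contributions gives $O(nk^2) + O(n) + O(k^{9/2}+nk^4) + O(nk)$, and the dominant terms are $k^{9/2}$ and $nk^4$, so the total is $O(k^{9/2}+nk^4)$, which I would rewrite as $O((k^{1/2}+n)\cdot k^4)$ to match the stated bound.

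The main obstacle I anticipate is purely bookkeeping rather than conceptual: I must confirm that no phase silently hides a larger cost. In particular, I would double-check that the center set $C$ produced by Algorithm~\ref{alg:ML_penalty_clustering_confi} has size $O(k)$ so that the nearest-center computations in Step~8 are genuinely $O(k)$ each, and that replacing ML sets by single weighted points in Step~4 does not increase the effective input size fed to Algorithm~\ref{alg:CL_penalty_clustering_local} beyond $n$. Once these sanity checks are in place, the additive combination of the three established bounds immediately yields the claim, since every other term ($O(n)$, $O(nk^2)$, $O(nk)$) is absorbed by $O(nk^4)$.
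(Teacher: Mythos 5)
Your proposal is correct and follows exactly the route the paper intends (the paper itself omits the proof of this lemma): the bound $O((k^{1/2}+n)\cdot k^4)$ is precisely $O(k^{9/2}+nk^4)$, obtained by adding the $O(nk^2)$ cost of Alg.~\ref{alg:ML_penalty_clustering_confi} from Lemma~\ref{lem:runtime_ml}, the $O(k^{9/2}+nk^4)$ cost of Alg.~\ref{alg:CL_penalty_clustering_local} from Lemma~\ref{lem:runtime_cl}, and the linear-in-$n$ glue steps, all of which are absorbed by the dominant term. Your sanity checks (that $|C|=k$ and that contracting ML sets does not enlarge the input to the CL stage) are the right ones and both hold.
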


\section{Experiments}
\label{sec:exp}
In this section, we report the proposed clustering method using LLM-generated constraints. We conduct an extensive comparison with four baseline methods under these constraints. Additionally, we evaluate our approach in terms of query accuracy and query time reduction. We then demonstrate the performance of our algorithm (LSCK-HC) across various embeddings and models. 
Finally, for constraint generation, we analyze the individual contributions of cannot-link and must-link constraints. 

\subsection{Experiment Setup}
\label{sec:exp_setup}
\textbf{Datasets.} Following the related previous works~\cite{zhang2023clusterllm,viswanathan2024large}, we compare the performance of our algorithms with baselines on the text clustering tasks on five datasets: tweet, banking77, clinc (I/D) and GoEmo. 

\begin{figure*}
    \centering
    \includegraphics[width=\linewidth]{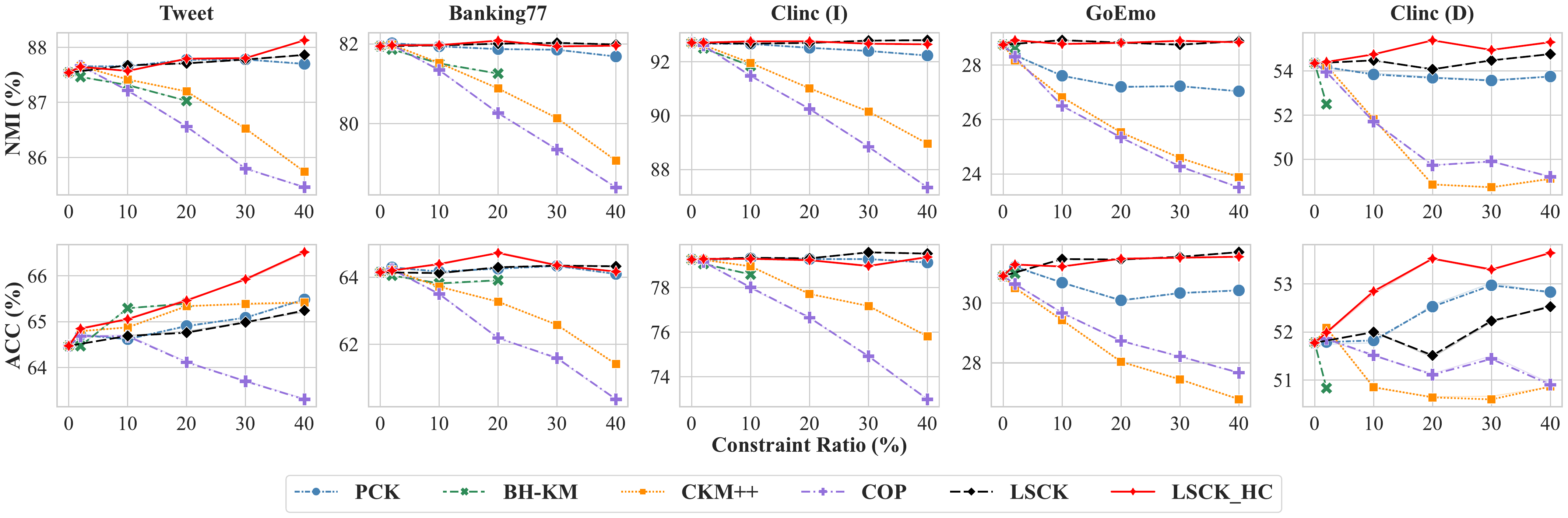}
    \caption{Comparison of average clustering results on datasets across different constrained instance ratios.}
    \label{fig:clustering_res}
\end{figure*}

\noindent\textbf{Baselines.}
We compare the performance with the closest baseline~\cite{viswanathan2024large}, called \textbf{FSC}, which uses the min-max method~\cite{mallapragada2008active} to select the query nodes and the PCK-means~\cite{basu2004active} clustering algorithm to cluster the data. For the clustering algorithm, we utilize the \textbf{$k$-means++}~\cite{arthur2007kmeanspp} as one of the baselines. 
In addition, we list the different popular constrained clustering algorithms based on $k$-means clustering below: 1) \textbf{COP}~\cite{wagstaff2001constrained}: It is the first algorithm designed to address the constrained clustering problem. 2) {\textbf{PCK}~\cite{basu2004active}:} It is a constrained clustering algorithm suitable for large data sets with sparse, high-dimensional data~\cite{deng2024a3s}. 3) \textbf{BH-KM}~\cite{baumann2022k}: A recent clustering algorithm with soft constraints, which uses a mixed-integer programming formulation, struggles to scale and tends to stop when faced with a large number of constraints;
4) \textbf{CKM++}~\cite{jia2023efficient}: It provides an algorithm to handle the hard constraint sets instead of the pairwise constraints. 
In addition, we set our clustering algorithm without any hard constraints, called \textbf{LSCK}.

\noindent\textbf{Metrics.}
According to the previous works~\cite{zhang2023clusterllm,viswanathan2024large,huang2024text,zhang2024unraveling}, we report the clustering performance by four metrics: accuracy (ACC) calculated after Hungarian alignment~\cite{kuhn1955hungarian}, and Normalized Mutual Information (NMI) calculates mutual information between two assignments, Rand Index (RI) and Adjusted Rand Index (ARI)  to quantify the similarity between the predicted clusters and the ground‑truth labels.

\noindent\textbf{Experimental Settings.}
The main experimental results evaluate the clustering with the constraints generated by LLMs. We directly apply these clusterings on extracted embeddings from Instructor-large~\cite{su2023one} and E5~\cite{wang2023goal} with the same prompt for~\cite{zhang2023clusterllm}. All experiments were conducted on a Linux machine equipped with an NVIDIA A100 80GB GPU, a 32-core CPU, and 256 GB of RAM, running Python 3.10. Each reported value in this experiment is the average result over 10 runs for each setting.

\subsection{Clustering Results with Constraints}
\label{subsec:main_exp_result}

We report the clustering quality of each algorithm regarding the same constraint sets while varying the ratio of constrained instances. BH-KM does not report all of the constraint ratios due to the difficulty of handling a large number of them.

\paragraph{Combining CL and ML Constraints.} 
In Sec.~\ref{subsec:Query_effi}, we provide the query sampling methods for generating cannot-link and must-link constraints, respectively. To evaluate clustering quality, we combine both constraint types through this process: 1) Randomly choose a CL set $Y$ to include in the final constraint collection; 2) If any points in $Y$ are included in a must-link (ML) constraint set, add the relevant ML constraints.
Continue repeating steps 1 and 2 until the target constraint ratio is achieved.

In Fig.~\ref{fig:clustering_res}, we show the clustering results for all datasets with constraints alongside the baseline results. The main observations are summarized as follows:

Compared with the baselines, \textbf{LSCK-HC and LSCK consistently improve clustering performance over the unconstrained setting.} Across all datasets, our algorithms outperform the 0\% constraint clustering results. In particular, clustering accuracy increases by more than 2\% as the constraint ratio rises to 40\% on the Tweet and CLINC (D) datasets. This improvement surpasses that of the baselines, which we attribute to the effectiveness of local search in finding solutions.

Except for our algorithms, it is observed that PCK generally outperforms COP and CKM++ by including a penalty to handle the constraints as soft constraints in its cost function. However, the effectiveness of this penalty diminishes when the constraint ratio exceeds 10\% across multiple datasets (e.g., GoEmo, Clinc (I)), leading to a decline in clustering quality. In contrast, our algorithms leverage a local search method, achieving robust clustering results. 
\textbf{The experimental results are consistent with our design, which can effectively utilize LLM-generated constraints. }

Alg.~\ref{alg:CL_ML_penalty} achieves its best performance with a 20\% constraint ratio for most datasets.
Moreover, this case can be attributed to two main factors: (1) as the number of constraints increases, the proportion of erroneous constraints also rises; and (2) prior studies including PCK and CKM++~\cite{basu2004active,jia2023efficient} have shown that improvements in clustering quality tend to plateau as more constraints are added, even when all constraints are correct. However, Tweet dataset is an exception in our experiments. We believe this is due not only to the high accuracy of LLM-generated constraints for this dataset, but also to its inherent focus on clustering as the primary task.

\subsection{Effectiveness and Efficiency of Constraint Generation}
\label{subsec:constraints_quality}
 In this section, we examine our method described in Sec.~\ref{subsec:Query_effi} with the baseline approach~\cite{viswanathan2024large}. We adjust the skeleton structure to better serve the CL constraints and draw inspiration from the coreset technique to develop an ML sampling method. To evaluate constraint effectiveness, we compare both the number of LLM queries and the quality of the resulting constraints by Rand Index.

\textbf{Query Times.} 
As shown in Tab.~\ref{tab:constraints_quality_QT}, we compare our algorithms (see Sec.~\ref{sec:Methods}) with the baseline in terms of the number of queries across different constraint ratios and datasets. \textbf{Our approach demonstrates a substantial advantage, reducing the number of LLM queries by at least 20-fold for each ratio on all datasets.} In particular, we vary the constraint ratio from 2\% to 20\% and report the corresponding LLM query counts for CL and ML constraints. We attribute the improvements to the selection method of the candidate constraint sets. Moreover, our proposed selection strategy enables the generation of multiple constrained relationships from a single query.

\textbf{Constraints Quality.} 
We evaluate the quality of the LLM-generated constraints in Tab.~\ref{tab:constraints_quality_QT}, which demonstrates that our method consistently achieves higher accuracy than the FSC. In particular, for must-link (ML) constraints, our algorithm significantly improves correctness. For example, when the constraint ratio is set to 2\%, the accuracy of FSC remains below 50\%, whereas our method achieves over 96\%. This substantial gap is likely due to the baseline deriving ML constraints indirectly, using inference steps that were originally designed for identifying cannot-link (CL) constraints. In contrast, our method explicitly selects the high-confidence ML constraint set based on distance-based criteria. As the constraint ratio increases, the accuracy of FSC also improves, mainly because more ML constraints are generated from steps specifically intended for must-link identification. Nevertheless, our method consistently maintains a higher level of accuracy, outperforming FSC by more than 12\% across all settings.

\subsection{Ablation Studies}
As illustrated in Fig.~\ref{fig:diff_model} and Tab.~\ref{tab:diff_embedding}, we evaluate the algorithms under two different models and embeddings. Then, we measure the influence of the constraints' correctness on the clustering performance, and the separate contributions for CL and ML clustering can be found in the full version.
\begin{table}[t]
\centering
\scalebox{0.68}{
\begin{tabular}{l|c|cccccc}
\toprule
\multirow{2}{*}{\textbf{Datasets}} & \multirow{2}{*}{\textbf{Methods}} &
\multicolumn{2}{c}{\textbf{2\%}} & \multicolumn{2}{c}{\textbf{10\%}} & \multicolumn{2}{c}{\textbf{20\%}} \\
\cmidrule(lr){3-4} \cmidrule(lr){5-6} \cmidrule(lr){7-8}
& & RI & \#Query & RI & \#Query & RI & \#Query \\
\midrule[0.5pt]
\midrule[0.5pt]
\multirow{2}{*}{Bank77-ML} & FSC  & 9.92 & 5260 & 8.13 & 23420 & 11.40 & 26580 \\
& Our & 
\textbf{96.42} & \textbf{91} & 
\textbf{88.08} & \textbf{313} & 
\textbf{85.83} & \textbf{480} \\
\midrule[0.4pt]
\multirow{2}{*}{Bank77-CL} & FSC  & 99.55 & 4070 & 99.32 & 22620 & 99.28 & 37715 \\
& Our & 
\textbf{99.75} & \textbf{89} & 
\textbf{99.50} & \textbf{641} & 
\textbf{99.48} & \textbf{1637} \\
\midrule[0.7pt]
\multirow{2}{*}{CLINC-ML} & FSC  & 42.05 & 25095 & 69.20 & 26725 & 78.97 & 29225 \\
& Our & 
\textbf{96.25} & \textbf{91} & \textbf{94.29} & \textbf{467} & \textbf{91.67} & \textbf{927} \\
\midrule[0.4pt]
\multirow{2}{*}{CLINC-CL} & FSC  & \textbf{99.57} & 12175 & 99.49 & 36100 & 99.48 & 42135 \\
& Our & 
99.52 & \textbf{93} & 
\textbf{99.64} & \textbf{662} & 
\textbf{99.59} & \textbf{1751} \\
\midrule[0.7pt]
\multirow{2}{*}{Tweet-ML} & FSC  & 50.00 & 13070 & 74.35 & 25800 & 83.82 & 27425 \\
& Our& 
\textbf{100.00} & \textbf{99} & 
\textbf{100.00} & \textbf{258} & 
\textbf{99.21} & \textbf{539} \\
\midrule[0.4pt]
\multirow{2}{*}{Tweet-CL} & FSC  & \textbf{99.74} & 5330 & 99.19 & 27440 & 99.17 & 31485 \\
& Our & 
99.66 & \textbf{111} & 
\textbf{99.67} & \textbf{323} & 
\textbf{99.56} & \textbf{821} \\
\bottomrule
\end{tabular}
}
\caption{Comparison of the LLM-generated constraint quality (RI) and the number of queries (\#Query) across different constrained instance ratios.}
\label{tab:constraints_quality_QT}
\end{table}
\paragraph{Robustness for Different Models and Embeddings.} 
In Fig.~\ref{fig:diff_model}, we compare the performance of two large language models (i.e., DeepSeek R1 and V3) on the Tweet and Banking77 datasets. First, we observe that the quality of the generated constraints varies with the chosen LLM, and those differences in constraint quality directly affect the resulting clustering performance. According to this, we provide the experimental results on how constraint quality affects performance. Next, we find that when evaluating clustering results with the same model, our algorithm outperforms PCK, aligning with the findings discussed in the above section. Thirdly, we show the improvement for our framework (LSCK-HC) over the baseline (FSC) while maintaining a consistent model (V3). For both of the datasets, we discovered that our algorithms exceed baseline performance. We attribute this advantage to the design of our selection strategy and clustering algorithm.

As shown in Tab.~\ref{tab:diff_embedding}, we compare clustering results for the Instructor‑large~\cite{su2023one} and E5~\cite{wang2023goal} embeddings, using the same language model and evaluated on three metrics. Our algorithm outperforms PCK on both embeddings, and PCK itself yields higher clustering performance with Instructor‑large than with E5, which aligns with previous evaluations~\cite{zhang2023clusterllm}. Under constrained clustering, our method delivers the greatest boost on the smaller E5, improving ARI by nearly 10\%. We attribute this to the lower baseline clustering quality of the smaller model, which allows the added constraints to have a stronger corrective effect.

\begin{figure}
    \includegraphics[width=\linewidth]{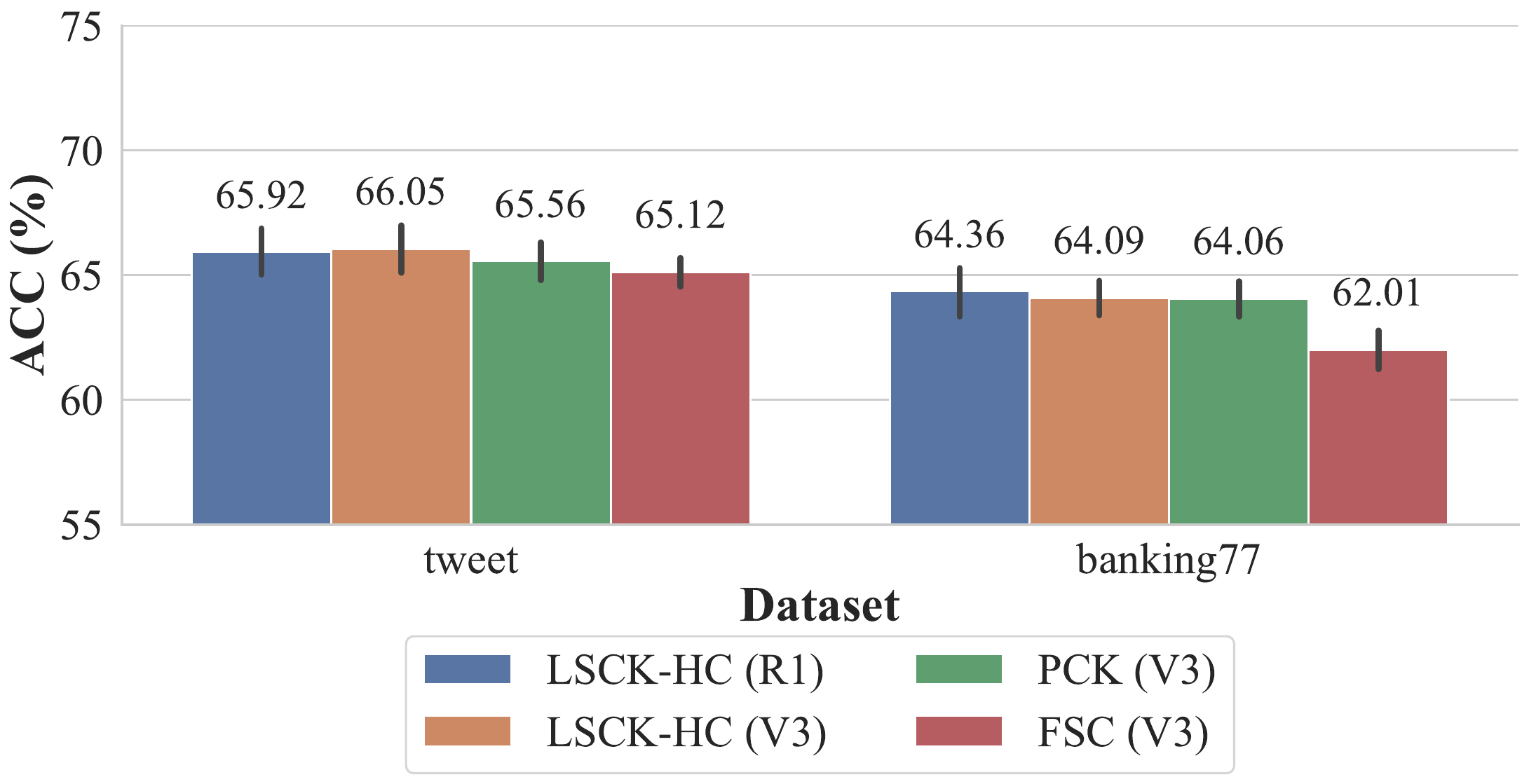}
    \caption{Comparison of clustering accuracy (ACC) across different large language models and algorithms with a $30\%$ constrained-instance ratio. Note that FSC in~\citet{viswanathan2024large} uses its own constraint‑generation algorithm and applies PCK for clustering.}
    \label{fig:diff_model}
\end{figure}

\begin{table}[t]
\centering
\scalebox{0.69}{
\begin{tabular}{l|cc|cc|cc}
\toprule
\multirow{2.5}{*}{Embedding}  
  & \multicolumn{2}{c|}{ACC} 
  & \multicolumn{2}{c|}{NMI} 
  & \multicolumn{2}{c}{ARI} \\
\cmidrule(lr){2-3}  \cmidrule(lr){4-5} \cmidrule(lr){6-7} 
  & PCK         & LSCK‑HC          & PCK         & LSCK‑HC          & PCK         & LSCK‑HC          \\
\midrule
Instructor‑large 
  & 65.08       & \underline{65.92} & 87.78       & \underline{87.80} & 55.30       & 55.89 \\ 
E5               
  & 60.18       & 62.90 & 85.36       & 85.86 & 48.68       & \underline{58.00} \\ 
\bottomrule
\end{tabular}
}
\caption{Comparison of clustering results on the Tweet dataset at a 30\% constrained-instance ratio across different embeddings. Underlined values indicate the best performance for each metric.}
\label{tab:diff_embedding}
\end{table}

\section{Conclusion}
\label{sec:conclusion}
In this paper, we present a text clustering algorithm using must-link and cannot-link constraints. Leveraging LLMs to generate constraints, we aim to improve clustering quality through an efficient constraint generation method while reducing resource consumption by treating constraints as sets. We further tune the constrained clustering algorithm to handle these LLM-generated constraints, including both hard and soft ML constraints as well as CL constraints. Moreover, we introduce penalties to mitigate the impact of false constraints and enhance clustering performance by local search. We evaluate our results on five short-text datasets, demonstrating that our method compares favorably in both clustering quality and the cost of constraint generation.

\section*{Acknowledgments}

This work is supported by National Natural Science Foundation of China (No. 12271098) and Key Project of the Natural Science Foundation of Fujian Province  (No. 2025J02011). 

\bibliography{aaai2026}

@inproceedings{zhang2023clusterllm,
  author    = {Yuwei Zhang and Zihan Wang and Jingbo Shang},
  title     = {ClusterLLM: Large Language Models as a Guide for Text Clustering},
  booktitle = {Proceedings of the 2023 Conference on Empirical Methods in Natural Language Processing (EMNLP)},
  pages     = {13903--13920},
  year      = {2023}
}

@article{viswanathan2024large,
  author    = {Vijay Viswanathan and Kiril Gashteovski and Carolin Lawrence and Tongshuang Wu and Graham Neubig},
  title     = {Large Language Models Enable Few-Shot Clustering},
  journal   = {Transactions of the Association for Computational Linguistics (TACL)},
  volume    = {12},
  pages     = {321--333},
  year      = {2024}
}

@inproceedings{wagstaff2001constrained,
  author    = {Kiri Wagstaff and Claire Cardie and Seth Rogers and Stefan Schroedl},
  title     = {Constrained $k$-Means Clustering with Background Knowledge},
  booktitle = {Proceedings of the 18th International Conference on Machine Learning (ICML)},
  pages     = {577--584},
  year      = {2001}
}

@inproceedings{wagstaff2000clustering,
  author    = {Kiri Wagstaff and Claire Cardie},
  title     = {Clustering with Instance-Level Constraints},
  booktitle = {Proceedings of the 17th International Conference on Machine Learning (ICML)},
  pages     = {1103--1110},
  year      = {2000}
}

@inproceedings{basu2002semi,
  author    = {Sugato Basu and Arindam Banerjee and Raymond J. Mooney},
  title     = {Semi-Supervised Clustering by Seeding},
  booktitle = {Proceedings of the 19th International Conference on Machine Learning (ICML)},
  pages     = {27--34},
  year      = {2002}
}

@inproceedings{basu2004active,
  author    = {Sugato Basu and Arindam Banerjee and Raymond J. Mooney},
  title     = {Active Semi-Supervision for Pairwise Constrained Clustering},
  booktitle = {Proceedings of the 2004 SIAM International Conference on Data Mining (SDM)},
  pages     = {333--344},
  year      = {2004},
  organization = {SIAM}
}

@inproceedings{arthur2007kmeanspp,
  author    = {David Arthur and Sergei Vassilvitskii},
  title     = {$k$-Means++: The Advantages of Careful Seeding},
  booktitle = {Proceedings of the 18th Annual ACM-SIAM Symposium on Discrete Algorithms (SODA)},
  pages     = {1027--1035},
  year      = {2007}
}

@inproceedings{guo2024efficient,
  author    = {Longkun Guo and Chaoqi Jia and Kewen Liao and Zhigang Lu and Minhui Xue},
  title     = {Efficient Constrained $k$-Center Clustering with Background Knowledge},
  booktitle = {Proceedings of the AAAI Conference on Artificial Intelligence (AAAI)},
  volume    = {38},
  pages     = {20709--20717},
  year      = {2024}
}

@inproceedings{deng2024a3s,
  title={A3S: A General Active Clustering Method With Pairwise Constraints},
  author={Deng, Xun and Liu, Junlong and Zhong, Han and Feng, Fuli and Shen, Chen and He, Xiangnan and Ye, Jieping and Wang, Zheng},
  booktitle={Proceedings of the 41st International Conference on Machine Learning (ICML)},
  pages={10488--10505},
  year={2024}
}

@article{baumann2024algorithm,
  title={An Algorithm for Clustering With Confidence-Based Must-Link and Cannot-Link Constraints},
  author={Baumann, Philipp and Hochbaum, Dorit S},
  journal={INFORMS Journal on Computing},
  year={2024},
  publisher={INFORMS}
}

@article{jia2023efficient,
  title={Efficient Algorithm for the $k$-Means Problem with Must-Link and Cannot-Link Constraints},
  author={Jia, Chaoqi and Guo, Longkun and Liao, Kewen and Lu, Zhigang},
  journal={Tsinghua Science and Technology},
  volume={28},
  number={6},
  pages={1050--1062},
  year={2023},
  publisher={TUP}
}

@article{yu2018semi,
  title={Semi-Supervised Ensemble Clustering Based on Selected Constraint Projection},
  author={Yu, Zhiwen and Luo, Peinan and Liu, Jiming and Wong, Hau-San and You, Jane and Han, Guoqiang and Zhang, Jun},
  journal={IEEE Transactions on Knowledge and Data Engineering},
  volume={30},
  number={12},
  pages={2394--2407},
  year={2018},
  publisher={IEEE}
}

@inproceedings{mallapragada2008active,
  title={Active Query Selection for Semi-Supervised Clustering},
  author={Mallapragada, Pavan Kumar and Jin, Rong and Jain, Anil K},
  booktitle={ICPR 2008 19th International Conference on Pattern Recognition (ICPR)},
  year={2008},
  organization={IEEE Computer Society}
}

@article{xiong2013active,
  title={Active Learning of Constraints for Semi-Supervised Clustering},
  author={Xiong, Sicheng and Azimi, Javad and Fern, Xiaoli Z},
  journal={IEEE Transactions on Knowledge and Data Engineering},
  volume={26},
  number={1},
  pages={43--54},
  year={2013},
  publisher={IEEE}
}

@inproceedings{davidson2005clustering,
  title={Clustering With Constraints: Feasibility Issues and the $k$-Means Algorithm},
  author={Davidson, Ian and Ravi, SS},
  booktitle={Proceedings of the 2005 SIAM International Conference on Data Mining (SDM)},
  pages={138--149},
  year={2005},
  organization={SIAM}
}

@inproceedings{pelleg2007k,
  title={$k$-Means With Large and Noisy Constraint Sets},
  author={Pelleg, Dan and Baras, Dorit},
  booktitle={European Conference on Machine Learning (ECML)},
  pages={674--682},
  year={2007},
  organization={Springer}
}

@inproceedings{ganji2016lagrangian,
  title={Lagrangian Constrained Clustering},
  author={Ganji, Mohadeseh and Bailey, James and Stuckey, Peter J},
  booktitle={Proceedings of the 2016 SIAM International Conference on Data Mining (SDM)},
  pages={288--296},
  year={2016},
  organization={SIAM}
}

@inproceedings{le2018binary,
  title={A Binary Optimization Approach for Constrained $k$-Means Clustering},
  author={Le, Huu M and Eriksson, Anders and Do, Thanh-Toan and Milford, Michael},
  booktitle={Asian Conference on Computer Vision (ACCV)},
  pages={383--398},
  year={2018},
  organization={Springer}
}

@inproceedings{baumann2020binary,
  title={A Binary Linear Programming-Based $k$-Means Algorithm for Clustering With Must-Link and Cannot-Link Constraints},
  author={Baumann, Philipp},
  booktitle={2020 IEEE International Conference on Industrial Engineering and Engineering Management (IEEM)},
  pages={324--328},
  year={2020},
  organization={IEEE}
}

@inproceedings{baumann2022k,
  title={A $k$-Means Algorithm for Clustering with Soft Must-link and Cannot-link Constraints},
  author={Baumann, Philipp and Hochbaum, Dorit},
  booktitle={the 11th International Conference on Pattern Recognition Applications and Methods (ICPRAM)},
  year={2022}
}

@article{gonzalez1985clustering,
  title={Clustering To Minimize the Maximum Intercluster Distance},
  author={Gonzalez, Teofilo F},
  journal={Theoretical Computer Science},
  volume={38},
  pages={293--306},
  year={1985},
  publisher={Elsevier}
}

@inproceedings{har2004coresets,
  title={On Coresets for $k$-Means and $k$-Median Clustering},
  author={Har-Peled, Sariel and Mazumdar, Soham},
  booktitle={Proceedings of the Thirty-Sixth Annual ACM Symposium on Theory of Computing (STOC)},
  pages={291--300},
  year={2004}
}

@article{kuhn1955hungarian,
  title={The Hungarian Method for the Assignment Problem},
  author={Kuhn, Harold W},
  journal={Naval Research Logistics Quarterly},
  volume={2},
  number={1-2},
  pages={83--97},
  year={1955},
  publisher={Wiley Online Library}
}

@article{huang2024text,
  title={Text Clustering As Classification With LLMs},
  author={Huang, Chen and He, Guoxiu},
  journal={ArXiv Preprint ArXiv:2410.00927},
  year={2024}
}

@inproceedings{su2023one,
  title={One Embedder, Any Task: Instruction-Finetuned Text Embeddings},
  author={Su, Hongjin and Shi, Weijia and Kasai, Jungo and Wang, Yizhong and Hu, Yushi and Ostendorf, Mari and Yih, Wen-tau and Smith, Noah A and Zettlemoyer, Luke and Yu, Tao},
  booktitle={Findings of the Association for Computational Linguistics (ACL)},
  pages={1102--1121},
  year={2023}
}

@article{lloyd1982least,
  title={Least Squares Quantization in PCM},
  author={Lloyd, Stuart},
  journal={IEEE Transactions on Information Theory},
  volume={28},
  number={2},
  pages={129--137},
  year={1982},
  publisher={IEEE}
}

@article{ahmed2022short,
  title={Short Text Clustering Algorithms, Application and Challenges: A Survey},
  author={Ahmed, Majid Hameed and Tiun, Sabrina and Omar, Nazlia and Sani, Nor Samsiah},
  journal={Applied Sciences},
  volume={13},
  number={1},
  pages={342},
  year={2022},
  publisher={MDPI}
}

@article{bae2020interactive,
  title={Interactive Clustering: A Comprehensive Review},
  author={Bae, Juhee and Helldin, Tove and Riveiro, Maria and Nowaczyk, S{\l}awomir and Bouguelia, Mohamed-Rafik and Falkman, G{\"o}ran},
  journal={ACM Computing Surveys (CSUR)},
  volume={53},
  number={1},
  pages={1--39},
  year={2020},
  publisher={ACM New York, NY, USA}
}

@inproceedings{fu2024acdm,
  title={ACDM: An Effective and Scalable Active Clustering with Pairwise Constraint},
  author={Fu, Xun and Xie, Wen-Bo and Chen, Bin and Deng, Tao and Zou, Tian and Wang, Xin},
  booktitle={Proceedings of the 33rd ACM International Conference on Information and Knowledge Management (CIKM)},
  pages={643--652},
  year={2024}
}

@article{li2019ascent,
  title={Ascent: Active Supervision for Semi-Supervised Learning},
  author={Li, Yanchao and Wang, Yongli and Yu, Dong-Jun and Ye, Ning and Hu, Peng and Zhao, Ruxin},
  journal={IEEE Transactions on Knowledge and Data Engineering},
  volume={32},
  number={5},
  pages={868--882},
  year={2019},
  publisher={IEEE}
}

@article{xiong2016active,
  title={Active Clustering With Model-Based Uncertainty Reduction},
  author={Xiong, Caiming and Johnson, David M and Corso, Jason J},
  journal={IEEE Transactions on Pattern Analysis and Machine Intelligence},
  volume={39},
  number={1},
  pages={5--17},
  year={2016},
  publisher={IEEE}
}

@inproceedings{davidson2005agglomerative,
  title={Agglomerative Hierarchical Clustering With Constraints: Theoretical and Empirical Results},
  author={Davidson, Ian and Ravi, SS},
  booktitle={European Conference on Principles of Data Mining and Knowledge Discovery (PKDD)},
  pages={59--70},
  year={2005},
  organization={Springer}
}

@article{cai2023review,
  title={A Review on Semi-Supervised Clustering},
  author={Cai, Jianghui and Hao, Jing and Yang, Haifeng and Zhao, Xujun and Yang, Yuqing},
  journal={Information Sciences},
  volume={632},
  pages={164--200},
  year={2023},
  publisher={Elsevier}
}

@inproceedings{wang2023goal,
  title={Goal-Driven Explainable Clustering via Language Descriptions},
  author={Wang, Zihan and Shang, Jingbo and Zhong, Ruiqi},
  booktitle={Proceedings of the 2023 Conference on Empirical Methods in Natural Language Processing (EMNLP)},
  pages={10626--10649},
  year={2023}
}

@inproceedings{van2020bipartite,
        title={Bipartite Matching in Nearly-Linear Time on Moderately Dense Graphs},
        author={van den Brand, Jan and Lee, Yin-Tat and Nanongkai, Danupon and Peng, Richard and Saranurak, Thatchaphol and Sidford, Aaron and Song, Zhao and Wang, Di},
        booktitle = {Proc. 61st Annu. IEEE Annual Symposium on Foundations of Computer Science (FOCS)},
        pages={919--930},
        year={2020},
        organization={IEEE}}

@article{feng2025bimark,
  title={BiMark: Unbiased Multilayer Watermarking for Large Language Models},
  author={Feng, Xiaoyan and Zhang, He and Zhang, Yanjun and Zhang, Leo Yu and Pan, Shirui},
  journal={ArXiv Preprint ArXiv:2506.21602},
  year={2025}
}

@article{zhang2025systematic,
  title={A Systematic Survey of Text Summarization: From Statistical Methods to Large Language Models},
  author={Zhang, Haopeng and Yu, Philip S and Zhang, Jiawei},
  journal={ACM Computing Surveys},
  volume={57},
  number={11},
  pages={1--41},
  year={2025},
  publisher={ACM New York, NY}
}

@inproceedings{zhang2024unraveling,
  title={Unraveling Privacy Risks of Individual Fairness in Graph Neural Networks},
  author={Zhang, He and Yuan, Xingliang and Pan, Shirui},
  booktitle={2024 IEEE 40th International Conference on Data Engineering (ICDE)},
  pages={1712--1725},
  year={2024},
  organization={IEEE}
}

\end{document}